%
%
%
%
%
\RequirePackage{fix-cm}
\documentclass[smallextended]{svjour3}       
\smartqed  
\usepackage{graphicx}
%
%
%
%
%
\usepackage{lipsum}
\usepackage{amsfonts}
\usepackage{graphicx}
\usepackage{epstopdf}
\usepackage{algorithmic}
\usepackage{soul}
\ifpdf
  \DeclareGraphicsExtensions{.eps,.pdf,.png,.jpg}
\else
  \DeclareGraphicsExtensions{.eps}
\fi

\newcommand{\be}{\begin{equation}}
\newcommand{\ee}{\end{equation}}

\newcommand{\bma}{\begin{pmatrix}}
\newcommand{\ema}{\end{pmatrix}}

\newcommand{\Dc}{\mathcal{D}}

\newcommand{\Fc}{\mathcal{F}}
\newcommand{\Gc}{\mathcal{G}}

\newcommand{\Lc}{\mathcal{L}}

\newcommand{\Oc}{\mathcal{O}}
\newcommand{\Pc}{\mathcal{P}}

\newcommand{\Rc}{\mathcal{R}}

\newcommand{\Gal}{\tilde{ \Gamma} }
\newcommand{\Gd}{\mathcal{G}^{\dagger}}
\newcommand{\Gdex}{\mathcal{G}^{\dagger}_{\text{ext}}}
\newcommand{\Gpar}{\mathcal{G}_{\theta}}



\renewcommand{\O}{\Omega}

\renewcommand{\O}{\Omega}

\newcommand{\Hb}{\mathbb{H}}
\newcommand{\Ib}{\mathbb{I}}

\newcommand{\Nb}{\mathbb{N}}

\newcommand{\Rb}{\mathbb{R}}

\newcommand{\Beq}{\begin{equation}}
\newcommand{\Eeq}{\end{equation}}

\newcommand{\beq}{\begin{equation*}}
\newcommand{\eeq}{\end{equation*}}
\newcommand{\bal}{\begin{align}}
\newcommand{\eal}{\end{align}}
\usepackage{graphicx, epsfig}
\graphicspath{{Figs_1/}}
\usepackage{bbm}
\usepackage{subcaption}
\usepackage{verbatim,url}
\usepackage{amssymb,amsmath}
\usepackage{dsfont}
\newcommand{\required}[1]{\section*{\hfil \sharp1\hfil}}           \usepackage{tikz}

\usepackage{hyperref}
\usepackage{upref}




\begin{document}

\title{A DeepONet for inverting the Neumann-to-Dirichlet operator in Electrical Impedance Tomography: An approximation theoretic perspective and numerical results
}
\subtitle{}


\author{Anuj Abhishek         \and
        Thilo Strauss 
}


\institute{Anuj Abhishek \at
             Department of Mathematics, Applied Mathematics and Statistics,\\ Case Western Reserve University \\
              \email{anuj.abhishek@case.edu}           
           \and
           Thilo Strauss \at
           School of AI and Advanced Computing,\\
             Xi'an Jiaotong Liverpool University\\
             \email{Thilo.Strauss@xjtlu.edu.cn}
}

\date{Received: date / Accepted: date}

\maketitle

\begin{abstract}
In this work, we consider the non-invasive medical imaging modality of 
Electrical Impedance Tomography (EIT), where the goal is to recover the 
conductivity in a medium from boundary current-to-voltage measurements, i.e., 
the Neumann-to-Dirichlet (N--t--D) operator. We formulate this inverse problem 
as an operator-learning task, where the aim is to approximate the implicitly 
defined map from N--t--D operators to admissible conductivities. To this end, 
we employ a Deep Operator Network (DeepONet) architecture, thereby extending 
operator learning beyond the classical function-to-function setting to the more 
challenging operator-to-function regime. We establish a universal approximation 
theorem that guarantees that such operator-to-function maps can be approximated 
arbitrarily well by DeepONets. Furthermore, we provide a computational 
implementation of our approach and compare it against the iteratively 
regularized Gauss--Newton (IRGN) method. Our results show that the proposed 
framework yields accurate and robust reconstructions, outperforms the baseline, 
and demonstrates strong generalization. To our knowledge, this is the first 
work that combines rigorous approximation-theoretic guarantees with 
DeepONet-based inversion for EIT, thereby opening a principled and interpretable 
pathway for use of DeepONets in such inverse problems.

\keywords{Electrical Impedance Tomography\and DeepONets \and Calderon's Problem}
\subclass{MSC 35J15\and MSC 41A65\and MSC 47H99}
\end{abstract}

\section{Introduction}
In recent years tremendous progress has been made in developing accurate, non-invasive medical imaging techniques for disease detection and diagnosis and to make it suitable for practical deployment. One such medical imaging technology is Electrical Impedance Tomography (EIT). EIT is a functional imaging technique that seeks to reconstruct the spatial conductivity within a domain by applying currents and measuring voltages across electrodes attached to the boundary of the domain of interest. EIT has been shown to be clinically useful in monitoring lung function and breast cancer, see e.g. \cite{chere_02,frer_02}. It is now becoming an increasingly promising neuro-imaging method, e.g. in early detection of cerebral edema and monitoring the effect of mannitol dehydration, \cite{yang_19}. It is also being used in epilepsy imaging to detect seizure onset zones, \cite{wro_18}.
Moreover, a bedside EIT stroke monitoring system has been developed for clinical practice and there is growing consensus among practitioners for utilizing EIT in multi-modal imaging for diagnosing and classifying strokes, see e.g.\cite{Agnelli_2020,rand_21}. In spite of the fact that EIT has lower spatial
resolution than some other tomographic techniques such as computed tomography and MRI, it has relatively
fast data acquisition times, relatively inexpensive hardware and does not use ionizing radiations, making
it a versatile and safer alternative to many other medical imaging modalities, see \cite{yang_22} and the references therein. 
\par Recall that a typical EIT experiment consists of injecting a weak excitation current through boundary electrodes surrounding the region of interest and imaging the distribution of electrical parameters in the region of interest by measuring voltage signals on such electrodes. Thus the measured data arises as a current-to-voltage map, which is physically related to the internal conductivity in the region of interest. Different kinds of tissues can be differentiated based on their different conductivity values, which may also be altered in diseased states. As such, the inverse problem consists of detecting physiological and pathological changes in the conductivity profile based on measurements of the current-to-voltage map across the boundary electrodes. More recently, there is a growing interest among theoreticians and practitioners alike in exploiting various machine-learning based algorithms and neural-network architectures to solve the EIT inverse problem. Against this background, in this work our main contributions are as follows:
\begin{enumerate}
\item  We successfully utilize the DeepONet-based approach for learning the map from Neumann-to-Dirichlet operators to conductivity functions in Electrical Impedance Tomography, extending DeepONet-based operator learning beyond learning a map between function spaces.
 \item We establish universal approximation guarantees for this operator-to-function setting, providing theoretical justification for why such architectures are effective and interpretable in critical applications like medical imaging.
\item  We show that the DeepONet architecture, closely related to implicit neural representations in computer vision, achieves state-of-the-art performance compared to classical baselines, while also exhibiting robustness under noise.
\end{enumerate}
\par For the benefit of the readers, in the following subsections we will first give a self-contained mathematical description of the inverse problem at hand and survey some related literature. In the subsequent sections, we will describe our main theoretical results, give details about the proposed architecture and support our findings with numerical experiments.

\subsection{\textbf{Mathematical preliminaries and background of the EIT problem}}
 We will begin by introducing the  notation used in this work: Let $\Omega\subset \Rb^d$ for $d\geq 2$ be some simply connected domain with smooth boundary and $\Omega^{\prime}\subset \Omega$ be a compact set. We denote the set of essentially (i.e. almost surely) bounded functions by $L^{\infty}(\Omega)$ and the subset of essentially positive functions by $L^{\infty}_{+}(\Omega).$ In general $H^s(\Omega)$ will be used to refer to the space of $L^2-$ based Sobolev space of functions on some smooth bounded domain $\Omega$. Additionally, $H^1_{\diamond}(\Omega)$ and $L^2_{\diamond}(\Omega)$ will be used to denote the spaces of $H^1$ and $L^2$ functions with vanishing integral mean on the boundary $\partial \Omega$, i.e. $h\in H^1_{\diamond}(\Omega):=\{h\in H^1(\Omega): \int_{\partial \Omega} h dS=0 \}$ and similarly,  $g\in L^2_{\diamond}(\Omega):=\{g\in L^2(\Omega): \int_{\partial \Omega} g dS=0 \}$. We will also denote the space of linear operators from some vector space $X$ to $Y$ by {{}{$\Lc(X,Y)$}}. Mathematically, the problem can be formulated as recovery of the conductivity function $\gamma(x)$ in some simply connected domain $\Omega$ based on electric potential and current measurements made entirely on the boundary $\partial \Omega$ (or, some subset of it $\Sigma\subset \partial \Omega$). We will first describe an abstract model which has been studied by many authors since it was first described in \cite{cald_80}. To formulate the problem  consider the following elliptic partial differential equation with associated Neumann boundary-data which serves as a model for EIT:
\begin{align}
 \nabla \cdot (\gamma \nabla u)&=0 \quad \mbox{ in } \Omega\label{eq:1}; \\
    \begin{split}
    {{}{\gamma\, }}\partial_{\nu}u|_{\partial \Omega}&= g \mbox{ on } \partial \Omega \quad(\text{or more generally } \Sigma)\\
    &=0 \mbox{ else.}
    \end{split}
\end{align} Here $\gamma\in L^{\infty}_{+}(\Omega)$ is the internal conductivity in the medium $\Omega$ such that $\gamma=1 \mbox { in } \Omega\setminus \Omega^{\prime}$, and $u(x)$ is the electric potential in the interior. Loosely speaking, we will refer to $\gamma\equiv 1$ as the background conductivity. The Neumann boundary-data models the applied current. In the abstract continuum model (CM), the corresponding boundary measurements of the voltage (Dirichlet data) are given by the  Neumann-to-Dirichlet (N-t-D) operator defined on  the boundary $\partial \Omega$ ( or more generally on a subset $\Sigma\subset\partial \Omega$):
\begin{align}
    \Lambda_{\gamma}:L^2_{\diamond}(\partial \Omega)\to L^2_{\diamond}(\partial \Omega), \quad g\mapsto u|_{\partial \Omega}.
\end{align}To emphasize, $u|_{\partial \Omega}$ refer to the voltage measurements on the the boundary $\partial \Omega$. Additionally, here and below we will use $\Lambda_1$ to denote the background N-t-D operator corresponding to the background conductivity. 
 We would like to mention here, that in the abstract mathematical formulation, typically the Dirichlet-to-Neumann (D-t-N) map is considered instead of the N-t-D map as is considered in this work. However, in practical settings such as in EIT, the data that is generated is in the form of current-to-voltage maps which is best captured by the N-t-D maps. A more practical formulation of the EIT problem is given by the Complete Electrode Model (CEM), \cite{ch_92}. As before we let $\gamma\in L^{\infty}_{+}(\Omega)$. Let there be $M$ open, connected and mutually disjoint electrodes, $E_m\subset \partial \Omega$, $m=\{1,\cdots,M\}$ having the same contact impedance $z>0$. Assuming that the injected currents $I_m$ on the electrodes satisfy the Kirchoff condition, i.e. $\Sigma_{m=1}^M I_m=0$, the resulting electric potential $(u,U)\in H^1(\Omega)\times \Rb^M$ satisfies \eqref{eq:1} along with the following conditions on the boundary:
\begin{align}
    \gamma \partial_{\nu}u&=0; \quad \text{on } \partial \Omega\setminus \bigcup E_m\\
    \int_{E_m}\gamma \partial_{\nu}u dS&= I_m \quad \text{on } E_m, m=1,\cdots, M\\
    u+z\gamma\partial_{\nu}u&= U_m \quad\text{on }E_m, m=\{1,\cdots,M\}
\end{align}
where the vector $U=(U_1,\cdots,U_M)$ represents voltage measurement on the electrodes satisfying a ground condition $\Sigma_{m=1}^M U_m=0$. We can thus define the $M-$electrode current to voltage operator,
$R_M(\gamma):\Rb^M_{\diamond}\to \Rb^M_{\diamond}: I:=(I_1,\cdots,I_M)\mapsto (U_1,\cdots, U_M)=U$ and $U=R_M(\gamma)I$. For later use, we also define the `shifted' CEM operator, $\tilde{R}_{M}(\gamma):={R}_{M}(\gamma)-{R}_{M}(1)$ where ${R}_{M}(1)$ is the background CEM current-to-voltage operator. Since the background operator {{}{${R}_{M}(1)$}} is known, $\tilde{R}_M(\gamma)$ is completely determined by $R_M(\gamma)$ and vice-versa. Before we define the inverse problem at hand, we will first define the set of admissible conductivities considered in this work.  We recall the following definition from \cite[Definition 2.2]{harr_19}. 
\begin{definition}
{{}{Note that for bounded smooth domains, $L^{\infty}{ (\Omega)}\cap L^2(\Omega)=L^{\infty}(\Omega)$.}} A set $\Fc\subset L^{\infty}{ (\Omega)} $ is called a finite dimensional subset of piecewise analytic functions if its linear span 
$$\mathrm{span }\ \Fc=\bigg\{ \sum\limits_{j=1}^{k}\lambda_j f_j:k \in \Nb,\lambda_j\in \Rb,f_j\in \Fc\bigg\}$$ contains only piecewise analytic function and $\mathrm{dim }\ (\mathrm{span }\ \Fc)<\infty.$
\end{definition}
Given such a finite dimensional subset $\Fc$ and two positive numbers $m,M$ we denote the space of \textit{admissible conductivities} $\Gamma$ as:
\begin{align}
 \Gamma&=\{\gamma\in \Fc :M>\gamma (x) \geq m>0, \gamma(x)=1 \text{ in }\Omega\setminus\Omega^{\prime}   \}  
\end{align}
As $\Omega$ is bounded, any such admissible conductivity is clearly contained in $L^2(\Omega). $ Furthermore, $\Gamma$ is closed by virtue of being a known finite dimensional subspace of $L^{\infty}(\Omega)$. Such conductivities have been considered in the literature before, see e.g. \cite{harr_19,al_19}. We note that the class of conductivities while restrictive, has been vastly used in the literature before and allows for the development of rigorous error bounds below which crucially use the Lipschitz stability estimates developed for the inverse problem at hand, see \cite{harr_19}.  Now we are ready to state the inverse problem in CM: \textit{given all possible pairs of associated functions $(g,u|_{\Sigma})$, we want to find $\gamma\in \Gamma$}.

\subsection{\textbf{Related Research}}

An EIT experiment applies the electrical current (Neumann data) on $\partial\Omega$ to measure the electrical potential differences on $\partial\Omega$. By doing so, the so-called Neumann-to-Dirichlet (NtD) operator {{}{$\Lambda_{\gamma}$}} is obtained. The inverse problem is reconstructing the unknown conductivity {{}{$\gamma$}} from a set of EIT experiments \cite{borcea2002electrical,cheney1999electrical,hanke2003recent,lionheart2004eit}. Early important results by Sylvester and Uhlmann \cite{SU}, and by Nachmann \cite{nach} established that unique recovery of the conductivity from the boundary data in the form of D-t-N map is possible in the (abstract) Calder\'{o}n problem set-up. Stability estimates for the Calder\'{o}n problem were given in \cite{ales,NS10}. Optimality of such stability estimates, at least up to the order of the exponents, was proved in \cite{Mand_01}. \cite{Salo_lec,U_sur} and many references therein contain an exhaustive survey of results in this area.

EIT is a highly nonlinear and strongly ill-posed problem. Therefore, in the classical literature, EIT requires a regularization for a good reconstruction \cite{jin2012reconstruction,jin2012analysis}. In the literature, there are abundant approaches for solving the EIT inverse problem including the factorization method \cite{kirsch2008factorization}, d-bar method \cite{Isaacson:2004}, or variational methods for least-square fitting \cite{jin2012reconstruction,jin2012analysis,ahmad2019comparison,khan20051d} as well as, on statistical methods \cite{NiAb19,ahmad2019comparison,kaipio2004posterior,strauss2015statistical}.

There is also some recent research on deep learning for solving the EIT inverse problem. They include using Physics inspired neural networks \cite{pokkunuru2023improved}, different kinds of convolutional neural networks  \cite{hamilton2018deep,fan2020solving,hamilton2019beltrami}, and shape reconstruction \cite{strauss2023implicit}. On the other hand, an architecture called Deep-Operator-Networks (DeepONet, or DON) was proposed in \cite{Lu_21} for learning implicitly defined operator maps between infinite-dimensional function spaces. This architecture was a massive generalization of an earlier known (shallow) operator-network architecture proposed in \cite{chen_95}, see also \cite{chen_93,chen_95_2}. Subsequently, variations of DeepONet (more generally, neural-operators) have been used for solving important physical problems and substantial efforts have been made to understand the theoretical underpinnings of why this approach seems to be particularly efficient, see e.g. \cite{Zhu2023,He2023,shih2024,Peyvan_2024,karn_23a,karn_24a,li2023,Raonic23,goswami2022,Bart_23,Li24,kovachki2024,deHoop23,kovachki_23,li2021,kov21} and the many references therein. The main idea in operator-learning is that it seeks to directly approximate the operator map between infinite-dimensional spaces and in many cases can be shown to approximate any continuous operator to desired accuracy. Hereby, we do not have the disadvantages of the somewhat limited resolution of traditional convolutional neural-network based methods such as \cite{hamilton2018deep,fan2020solving,hamilton2019beltrami}. \textit{While most of the work in the literature has so far appeared in approximating a map between two function spaces, in EIT, we are interesting in learning the map between a space of (N-t-D) operators to a space of (conductivity) functions.} Furthermore, we do not only provide strong experimental results for the DeepONet implementation for EIT but also provide approximation-theoretic guarantees as a theoretical justification for this approach. 

\par In the computer-vision community, a similar DeepONet based architecture was introduced as implicit-neural-networks or, INNs. The idea here is to replace conventional discretized signal representation with implicit representations of 3D objects via neural networks. In discrete representations images are represented as discrete grids of pixels, audio signals are represented as discrete samples of amplitudes, and 3D shapes are usually parameterized as grids of voxels, point clouds, or meshes. In contrast, Implicit Neural Representations seek to represent a signal as a continuous function that maps the domain of the signal (i.e., a coordinate, such as a pixel coordinate for an image) to the value of the continuous signal at that coordinate (for an image, an R,G,B color). Some early work in this field includes single-image 3D reconstructions \cite{mescheder2019occupancy,hu2017deep,Sitzmann20}, representing texture on 3D objects \cite{oechsle2019texture}, representing surface light fields in 3D \cite{oechsle2020learning}, or 3D reconstructions from many images \cite{mildenhall2020nerf}. \textit{Both DeepONets and INNs seek to learn implicitly defined continuous maps between the input and output spaces and thus share many similarities.} Motivated by these works, one of the authors of this article used such network architectures for shape reconstruction in the EIT problem \cite{strauss2023implicit}.

\section{The Learning Problem for EIT}\label{sec:5}

 Our goal in this article is to propose a DeepONet based neural-network architecture for learning to invert Neumann-to-Dirichlet operator relevant to the problem of EIT. To formally set up the learning problem, let us consider the shifted abstract N-t-D operator $\tilde{\Lambda}_{\gamma}={\Lambda}_{\gamma}-\Lambda_{1}$, where $\Lambda_{1}$ is the background N-t-D operator. Analogous to the CEM case, $\tilde{\Lambda}_{\gamma}$ is completely determined by $\Lambda_{\gamma}$ and vice-versa. Now we list some mapping properties of the operator $\tilde{\Lambda}_{\gamma}$ which will be crucial in our analysis. First of all, note that $\tilde{\Lambda}_{\gamma}$ is a  smoothing operator, see e.g. \cite[Theorem A.3]{hyvonen_09} and \cite[(2.5)]{garde_21}. In particular the operator norm of $\tilde{\Lambda}_{\gamma}$ is bounded, i.e., $\lvert\lvert \tilde{\Lambda}_{\gamma}\rvert\rvert_{\Lc( H^s\to H^t)}\leq C$ for all $s,t\in \Rb$. {}{In fact, following \cite{NiAb19,garde_21}, it can be shown that the operator $\tilde{\Lambda}_{\gamma}$ belongs to the class of Hilbert-Schmidt operators.} For readers' convenience we recall the definition of a Hilbert-Schmidt operator from \cite{NiAb19} here. Consider an orthonormal basis of $L^2(\partial \Omega)$ given by $
\{ \phi_k := \phi_k^{(0)} \}_{k=0}^{\infty}$ consisting of real-valued eigenfunctions of the Laplace–Beltrami operator on the compact manifold \( \partial \Omega \). 
By removing the constant function \( \phi_0 \)  from the above collection, we obtain bases for
$L^2(\partial \Omega)/\mathbb{C}$. {}{Indeed, $L^2_\diamond(\partial\Omega)$ is isomorphic to $L^2(\partial\Omega)/\mathbb{C}$ and we adopt the $L^2_\diamond$ notation to emphasize the zero-mean property. The space  $H^r_\diamond(\partial\Omega)$ is similarly interpreted. Also, we make the identification $L^2_\diamond(\partial\Omega)$ with $H^0_{\diamond}(\partial\Omega).$} Now, by an appropriately rescaling these basis functions we can also get orthonormal bases 
$\{ \phi_k^{(r)}  \}_{k=1}^{\infty} $
of all \( H^r(\partial \Omega)/\mathbb{C} \) and \( H^r_\diamond(\partial \Omega) \) spaces, \( r \in \mathbb{R} \). For \( j, k \in \mathbb{N} \), denote the tensor product operator, \( b_{jk}^{(r)} : H^r(\partial \Omega) \to L^2(\partial \Omega) \) 
\[
b_{jk}^{(r)}(f) = \phi_j^{(r)} \otimes \phi_k^{(0)} (f) := \langle f, \phi_j^{(r)} \rangle_{H^r(\partial D)} \phi_k^{(0)}, \quad f \in H^r(\partial \Omega),
\]
and define the space of linear operators
\[
\mathbb{H}_r := \left\{ T : H^r(\partial \Omega) \to L^2(\partial \Omega), \quad T = \sum_{j,k=1}^\infty t_{jk} b_{jk}^{(r)} : t_{jk} \in \mathbb{R}, \sum_{j,k=1}^\infty t_{jk}^2 < \infty \right\}.
\]
The elements of \( \mathbb{H}_r \) are the ‘Hilbert–Schmidt’ operators between \( H^r(\partial \Omega) \) and \( L^2(\partial \Omega) \). Furthermore, \( \mathbb{H}_r \) is itself a Hilbert space with inner product
\[
\langle S, T \rangle_{\mathbb{H}_r} := \sum_{j,k=1}^\infty s_{jk} t_{jk} \equiv \sum_{j,k=1}^\infty \langle S \phi_j^{(r)}, \phi_k^{(0)} \rangle_{L^2(\partial D)} \langle T \phi_j^{(r)}, \phi_k^{(0)} \rangle_{L^2(\partial D)}.
\] From \cite [Lemma 18]{NiAb19}, we know that $\tilde{\Lambda}_{\gamma}\in \Hb_r$ for any $r$. Consider the set $D_{\Lambda}=\{\tilde{\Lambda}_{\gamma}:\gamma\in \Gamma \}$ of N-t-D operators corresponding to admissible conductivities. We show in Lemma \ref{lem:1} that $D_{\Lambda}$ is a compact set of $\Hb_r$.

\par Formally, the problem is to `learn' the map $G^{\text{true}}:D_{\Lambda}\to\Gamma$. More particularly, we want to design a deep neural network (DeepONet) to approximate the map, $G^{\text{true}}$. For learning operators between infinite dimensional spaces, DeepONets were introduced in \cite{LM_22,Lu_21}. These generalized the (shallow-net) architecture proposed in \cite{chen_95_2,chen_95}. Theoretical guarantees for DeepONets were given in \cite{LM_22} where the neural network tries to learn operators between two Hilbert spaces. This theory was further generalized in \cite{kov21,kov_th} and an architecture called Fourier Neural Operators was introduced for learning operators between Banach spaces having a certain approximation property. Fairly recently, DeepONets and FNOs have been synthesized as Neural Inverse Operators and they have been used effectively in solving several inverse problems, \cite{mish_23}. 
\par {{} {Before moving on, we would like to emphasize an important distinction of the learning problem formulated in this work with the existing classical DeepONet setting. Although the (shifted) Neumann-to-Dirichlet operators considered in this work can be embedded in the Hilbert space $\mathbb{H}_r$ of Hilbert-Schmidt operators, the learning problem cannot be treated as a direct application of the standard function-to-function DeepONet framework. Classical DeepONet theory assumes that the input is a function that can be accessed through pointwise sensor evaluations. In contrast, in Electrical Impedance Tomography the input object is itself a linear operator, and the available measurements correspond to finite collections of electrode responses obtained by applying this operator to boundary current patterns, rather than pointwise samples of a function. As a result, connecting the physical measurement model with the operator-learning architecture requires additional structure. In particular, we identify an encoder-decoder construction that maps the continuum Neumann-to-Dirichlet operator to discrete electrode measurements and subsequently lifts these measurements back into an operator representation in $\mathbb{H}_r$. Establishing the validity of this representation relies on convergence results from EIT theory showing that the discrete CEM operator approximates the continuum N-t-D operator as the number of electrodes increases. These ingredients are not present in the standard DeepONet setting and are essential for formulating the inverse problem as learning an operator-to-function map.}}

With this background in mind, and to set the stage for formulating our learning problem as learning an operator between two Hilbert Spaces, let us introduce a  (regular)`link function' {$\Phi:\mathbb{R}\to [m,\infty)$} such that any conductivity $\gamma\in \Gamma$ is given by $\gamma =\Phi\circ f$ for some $f\in L^{\infty}(\Omega)\cap L^2{(\Omega)}:=\tilde{\Gamma}$. 
 We recall the following example of such a link function from \cite [Example 8]{NGW_20}. {{}{Consider the function \(\phi : \mathbb{R} \to (0,\infty)\) defined as:
$\phi(x) = e^{x}\mathbf{1}_{\{x<0\}} + (1+x)\mathbf{1}_{\{x \ge 0\}} .
$
where $\mathbf{1}_{A}$ denotes the characteristic function of a set $A$.
Let \(\psi : \mathbb{R} \to [0,\infty)\) be a smooth, compactly supported function satisfying
$\int_{\mathbb{R}} \psi(y)\,dy = 1,
$
and consider the convolution
$(\phi * \psi)(x) = \int_{\mathbb{R}} \phi(x-y)\psi(y) dy.
$
Then, for any \(K_{\min} \in \mathbb{R}\):
\[
\Phi : \mathbb{R} \to (K_{\min},\infty), \qquad 
\Phi(x) = K_{\min} + \frac{1-K_{\min}}{(\psi * \phi)(0)}\,(\psi * \phi)(x),
\]
is a regular link function with range \((K_{\min},\infty)\). Furthermore, we recall the following properties of such a link function from \cite[Lemma 19]{NGW_20} and \cite[eq. 16-19]{NiAb19} that will be needed later in our proofs: \begin{equation}
\|\Phi \circ f - \Phi \circ f_0\|_{\infty}
\leq
C \|f - f_0\|_{\infty},
\end{equation}
\begin{equation}
\|\Phi^{-1} \circ \gamma - \Phi^{-1} \circ \gamma_0\|_{\infty}
\leq
c \|\gamma - \gamma_0\|_{\infty},
\end{equation}
\begin{equation}
\|\Phi \circ f\|_{H^s(\Omega)}
\leq
C' \bigl(1 + \|f\|_{H^s(\Omega)}^s\bigr),
\end{equation}
and
\begin{equation}
\|\Phi^{-1} \circ \gamma_0\|_{H^s(\Omega)}
\leq
c' \bigl(1 + \|\gamma_0\|_{H^s(\Omega)}^s\bigr).
\end{equation}}} Now we will modify our operator learning problem. For this we first define a map, $\Gd:D_{\Lambda}\to \tilde{\Gamma}$ where $\Gd=\Phi^{-1}\circ G^{\text{true}}$. Thus knowing $\Gd$ completely determines $G^{\text{true}}$. Next we extend this map: Let $\mathcal{G}^{\dagger}:D_{\Lambda}\to L^{\infty}(\Omega)\cap L^2(\Omega)$ be the map as above, then there exists an extension $\mathcal{G}^{\dagger}_{\text{ext}}:\Hb_r\to L^{\infty}(\Omega)\cap L^2(\Omega)$ such that $\mathcal{G}^{\dagger}_{\text{ext}}|_{D_{\Lambda}}=\mathcal{G}^{\dagger}$. The extension $\Gdex$ exists by Lemma \ref{th:dug}. The map $\Gdex$ will be approximated by a \textit{deep neural operator} called DeepONet. Essentially, a neural  operator is a parametric map between the input space (e.g. space of N-t-D operators) and the output space (e.g. space of admissible conductivities) that can be interpreted as a composition of three maps, an \textit{encoder}, an \textit{approximator} and a \textit{reconstructor}, see \cite{LM_22} and Fig.\ \ref{Fig:a}. As such, an upper bound on the error in approximating the true operator by a neural operator can be split into upper bounds on the encoder error, the approximator error and the reconstructor error correspondingly. Now we recall the components of a DeepONet as proposed in \cite{LM_22,Lu_21} and adapt the definitions as applicable to the case of EIT. To that end, we define the following operators while keeping the terminology the same as in \cite{LM_22}:

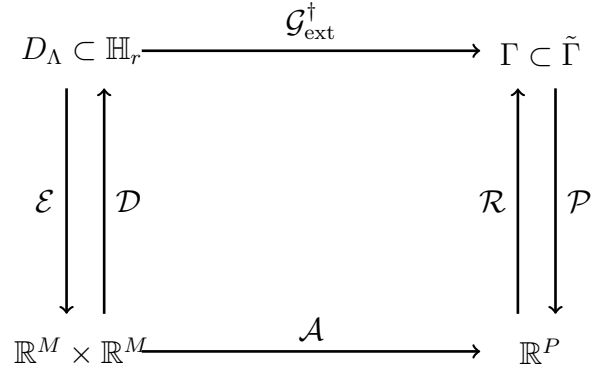
\begin{figure}
\begin{center} 
\begin{tikzpicture}
    \tikzset{boldarrow/.style={->, line width=1.0pt}}

    \draw[boldarrow] (0.0,3) to node[left] {$\mathcal{E}$} (0.0,0);
    \draw[boldarrow] (0.5,0) to node[right] {$\Dc$} (0.5,3);
    
    \draw[boldarrow] (6.0,0) to node[left] {$\Rc$} (6.0,3);
    \draw[boldarrow] (6.5,3) to node[right] {$\Pc$} (6.5,0);
    
    \draw[boldarrow] (1.0,-0.5) to node[above] {$\mathcal{A}$} (5.5,-0.5);
    \draw[boldarrow] (1.0,3.5) to node[above] {$\mathcal{G}^{\dagger}_{\text{ext}}$} (5.5,3.5);
    
    \draw node at (0.2,-0.5) {$\mathbb{R}^{M}\times \mathbb{R}^M$};
    \draw node at (0.2,3.5) {$D_{\Lambda}\subset\mathbb{H}_r$};
    \draw node at (6.3,-0.5) {$\mathbb{R}^{P}$};
    \draw node at (6.3,3.5) {$\Gamma\subset\tilde{\Gamma}$};
\end{tikzpicture}
\end{center}
\caption{The true map $\mathcal{G}^{\dagger}_{\text{ext}}$ is approximated by a composition of three maps, encoder $\mathcal{E}$, approximator $\mathcal{A}$ and reconstructor $\mathcal{R}$. The resultant error in the approximation thus comprises of encoder, approximator, and reconstructor errors. }
\label{Fig:a}
\end{figure}
 
 \begin{itemize}
     \item \textit{Encoder}: Given a set of $M$ electrodes covering the boundary of the domain, $E_m\subset \partial \Omega$, $m=\{1,\cdots,M\}$ we define the encoder:
     \begin{align}
       \mathcal{E}_M:=\mathcal{E}:D_{\Lambda}\subset\Hb_{r}\to \Rb^M\times \Rb^M;\quad \tilde{\Lambda}_{\gamma}\mapsto \tilde{R}_{M}(\gamma)
     \end{align}
     The encoder can be thought of as `sensor evaluations' in the spirit of \cite{LM_22}. Whenever it is clear from context, we will denote $\mathcal{E}_M$ by $\mathcal{E}$. It follows from Lemmas \ref{lem:7.3} and \ref{lem:7.4} that the map $\mathcal{E}$ is continuous and from Lemma \ref{th:dug} can be extended continuously on $\Hb_r$. With a slight abuse of notation, we call this extended map $\mathcal{E}$ too.
    \item\textit{Decoder:} While the encoder is a projection of an infinite-dimensional object (i.e. $\tilde{\Lambda}_{\gamma}\in \Hb_r$) into a finite-dimensional space, a decoder lifts back a finite dimensional representation, $R_{M}(\gamma)$ into its corresponding map in the space of Hilbert-Schmidt operators. This lifting is done using the construction given in \cite[section 6]{garde_21}  and results in an operator ${\Lambda}^M_{\gamma}$, see \cite[eqn. 6.8]{garde_21}. Thus, we have:
    \begin{align}
       \Dc_M:= \Dc:\Rb^M\times \Rb^M\to \Hb_{r}; \quad \tilde{R}_{M}(\gamma)\mapsto \tilde{\Lambda}^M_{\gamma}.
    \end{align}
    Similar to above, whenever it is clear from context, we will use the shorthand notation $\Dc$ instead of $\Dc_M$. Note that, $\Dc\circ\mathcal{E}(\tilde{\Lambda}_{\gamma})=\tilde{\Lambda}^M_{\gamma}.$ In particular, we use here as the decoder the construction outlined in \cite[Section 3]{hyv_08}. 
    {{}{For the sake of completeness, we briefly outline this construction from the article \cite{hyv_08}. Indeed given any Neumann data $f \in L^2_\diamond(\partial\Omega)$ we define 
$\tilde{\Lambda}_\gamma^M f := \tilde{R}_M(\gamma)\tilde{P}_E^M f$, 
where the operator $\tilde{P}_E^M$ is an auxiliary projection operator introduced in \cite[eq. (5.1)]{hyv_08}. This operator $\tilde{P}_E^M$ maps a boundary current $f \in L^2(\partial\Omega)$ to an electrode current pattern in the following manner.
First let $P_E^M$ denote the projection that replaces the boundary current by its average value on each electrode. 
More precisely $(P_E^M f)(s) = \frac{1}{|E_j|}\int_{E_j} f dS$ for $s \in E_j$ and zero elsewhere. 
Since $P_E^M f$ does not in general belong to the space of admissible electrode currents the modified projection 
$\tilde{P}_E^M f = P_E^M f - Kf$ is introduced where 
$Kf = \frac{\sum_j (P_E^M f)_j |E_j|}{\sum_j |E_j|}$. 
The constant $Kf$ is chosen so that the resulting current pattern satisfies the conservation of charge condition in the complete electrode model. }}
    \item \textit{Approximator}: The approximator is a deep (feedforward) neural network defined as:
    \begin{align}
        \mathcal{A}: \Rb^M\times \Rb^M \to \Rb^{P}; \quad \tilde{R}_{M}(\gamma)\mapsto \{\mathcal{A}_k\}_{k=1}^P
    \end{align}
    The composition of encoder and approximator, i.e. $\beta:=\mathcal{A}\circ\mathcal{E}:\Hb^r\to \Rb^P$ will be referred to as the \textit{branch net}.
    \item \textit{Reconstructor}: Following \cite{LM_22}, we denote a trunk-net:
    \begin{align}
        \mathbf{\tau}: \Rb^d\to \Rb^{P+1},\quad y=(y_1,\cdots,y_d)\mapsto \{\tau_k\}_{k=0}^P
    \end{align}
    where each $\tau_k$ is a feed-forward neural net. This can be thought of as a \textit{point-encoder} which encodes a point $y\in \Rb^d$ into a $P+1$ dimensional representation. Subsequently we denote a $\mathbf{\tau}$-induced \textit{reconstructor}:
    \begin{align}
       \Rc_{P}:= \Rc:\Rb^P\to L^2(\Omega), \quad \Rc(\alpha_k):=\tau_0(y)+\Sigma_{k=1}^{P}\alpha_k \tau_k(y).
    \end{align}
\item \textit{Projector}: Given a reconstructor $\Rc$ as above with $\tau_k\in L^2(\Omega); k=0,\cdots,P$ and $\{\tau_k\}_{k=1}^P$ as linearly independent we can define a projector:
\begin{align}
   \Pc_P:= \Pc:L^2(\Omega)\to \Rb^p,\quad \Pc(f)=\left(\langle f-\tau_0,\tau_1^*\rangle,\cdots,\langle f-\tau_0,\tau_P^*\rangle\right)
\end{align}
where $\tau_{1}^*,\cdots,\tau_P^*$ denotes the dual basis of $\tau_1,\cdots,\tau_p$ such that:
\begin{align*}
    \langle \tau_k,\tau_l^*\rangle=\delta_{kl} \quad \forall \text{ } k,l\in \{1,\cdots,P\}.
\end{align*}
    
    \end{itemize}

\begin{section}{Main Approximation Theorem}

The quality of the approximation will be determined by the extent to which the maps in Fig. \ref{Fig:a} commute. The neural operator $\Gpar$ will approximate the `true map' $\Gdex$. Recall that we denote encoding (by sensor evaluations) by $\mathcal{E}$, decoding by $\Dc$, Reconstructor map by $\Rc$, the projector map by $\Pc$, and the approximator map between finite dimensional spaces as $\mathcal{A}$. Referring to Fig. \ref{Fig:a}, notice that our neural operator will have the form $\Gpar=\Rc\circ\mathcal{A}\circ\mathcal{E}$.  
Our goal is to find an upper bound on the error incurred in approximating $\Gdex$ by $\Rc\circ\mathcal{A}\circ \mathcal{E}:=\Gpar$ over the space of shifted N-t-D operators corresponding to admissible conductivities. More particularly we will prove the following theorem.
\begin{theorem}\label{lem:2}
    Consider the sets $\Hb_{r}, D_{\Lambda}$, $\Gal$, and $\Gdex$ as above. Note $D_{\Lambda}$ is a compact set of $\Hb_{r}$. We will show that $\forall \epsilon >0$, there exist numbers $M \text{ and }P$ and continuous maps $\mathcal{E}:\Hb_{r}\to \Rb^M\times \Rb^M$, $\Rc:\Rb^P \to \tilde{\Gamma}$ and $\mathcal{A}:\Rb^{M}\times \Rb^{M}\to \Rb^{P}$ such that:
\begin{align}
\sup\limits_{\tilde{\Lambda}_{\gamma}\in D_{\Lambda}} \lvert\lvert \mathcal{G}^{\dagger}_{\text{ext}}(\tilde{\Lambda}_{\gamma})-\Rc\circ\mathcal{A}\circ\mathcal{E}(\tilde{\Lambda}_{\gamma})\rvert\rvert_{L^{2}}\leq \epsilon
    \end{align}
\end{theorem}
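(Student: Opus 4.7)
The plan is to follow the three-way error decomposition used in the DeepONet approximation theory of \cite{LM_22}, adapted to the fact that here the input space is a compact subset $D_{\Lambda}$ of the Hilbert–Schmidt space $\Hb_r$ rather than a function space. The four ingredients I expect to rely on are: (a) compactness of $D_{\Lambda}$ in $\Hb_r$ from Lemma \ref{lem:1}; (b) continuity of the extended map $\Gdex$ on $\Hb_r$ from Lemma \ref{th:dug}; (c) uniform convergence $\Dc \circ \Ec \to \mathrm{id}$ on $D_{\Lambda}$ as $M \to \infty$, which should follow from the CEM-to-continuum electrode-approximation estimates in \cite{garde_21, hyv_08}; and (d) the classical universal approximation theorem for feed-forward networks on compact subsets of Euclidean space.

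First I would telescope
\begin{align*}
\Gdex - \Rc\circ\Ac\circ\Ec \;=\; \bigl(\Gdex - \Rc\circ\Pc\circ\Gdex\bigr) \;+\; \Rc\circ\bigl(\Pc\circ\Gdex - \Ac\circ\Ec\bigr),
\end{align*}
splitting the total error into a reconstructor part and a combined encoder/approximator part. For the first part, since $\Gdex(D_{\Lambda}) \subset L^2(\Omega)$ is compact (continuous image of a compact set), I would choose $P$ large enough and a linearly independent family $\{\tau_k\}_{k=1}^{P} \subset L^2(\Omega)$ (for example, truncations of a Hilbert basis together with a suitable $\tau_0$) so that $\Rc\circ\Pc$ approximates $\Gdex(\tilde{\Lambda}_\gamma)$ to within $\epsilon/2$ in $L^2$ uniformly over $D_\Lambda$. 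For the second part I would insert the decoder and split once more:
\begin{align*}
\Pc\circ\Gdex - \Ac\circ\Ec \;=\; \bigl(\Pc\circ\Gdex - (\Pc\circ\Gdex\circ\Dc)\circ\Ec\bigr) \;+\; \bigl(\Pc\circ\Gdex\circ\Dc - \Ac\bigr)\circ\Ec.
\end{align*}
The first difference (the encoder error) is made small by combining continuity of $\Pc\circ\Gdex$ on $\Hb_r$ with the uniform convergence $\lvert\lvert\Dc\circ\Ec(\tilde{\Lambda}_\gamma) - \tilde{\Lambda}_\gamma\rvert\rvert_{\Hb_r} \to 0$ over $\gamma\in\Gamma$ as $M\to\infty$. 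The second difference (the approximator error) is controlled by noting that $\Pc\circ\Gdex\circ\Dc : \Rb^M\times\Rb^M \to \Rb^P$ is continuous and its restriction to the compact set $\Ec(D_{\Lambda}) \subset \Rb^{2M}$ can be approximated in Euclidean sup norm to arbitrary accuracy by a feed-forward network $\Ac$ via universal approximation; boundedness of the linear map $\Rc : \Rb^P \to L^2(\Omega)$ then converts these $\Rb^P$ estimates into $L^2(\Omega)$ error bounds.

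The main obstacle I anticipate is upgrading the encoder bound from the pointwise CEM-approximation results of \cite{hyv_08, garde_21} to the uniform statement $\sup_{\tilde{\Lambda}_\gamma\in D_{\Lambda}} \lvert\lvert \Dc\circ\Ec(\tilde{\Lambda}_\gamma) - \tilde{\Lambda}_\gamma\rvert\rvert_{\Hb_r} \to 0$. This likely requires an equicontinuity/Arzel\`a--Ascoli-type argument that exploits the fact that the operators $\tilde{\Lambda}_\gamma$ are uniformly smoothing (so $D_\Lambda$ embeds into a common more regular Hilbert–Schmidt class) and that the auxiliary projections $\tilde{P}^{E}_{M}$ entering the decoder depend only on the electrode geometry, not on $\gamma$. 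Once this uniformity is in hand, choosing first $P$ (to control the reconstructor), then $M$ (to control the encoder) and finally the network parameters of $\Ac$ (to control the approximator), and combining the three bounds via the triangle inequality, yields the claimed estimate.
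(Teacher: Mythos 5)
Your decomposition into reconstructor, encoder, and approximator errors is exactly the one the paper uses (the three terms you obtain after the second split coincide with the paper's $E_{\Rc}$, $E_{\Ec}$, $E_{\Ac}$), and you invoke the same ingredients: compactness of $D_{\Lambda}$, the Dugundji extension of $\Gdex$, the CEM-to-continuum convergence of \cite{hyv_08,garde_21}, and universal approximation for $\Ac$ on the compact image $\Ec(D_{\Lambda})$. The only differences are cosmetic — the paper makes the reconstructor and approximator steps quantitative (via $H^{s}$ regularity of piecewise analytic conductivities and Yarotsky-type size bounds) and handles the uniformity of the encoder limit by a modulus-of-continuity argument on the compact set $Z=\bigcup_{M}U_{M}(D_{\Lambda})\cup D_{\Lambda}$, which is precisely the resolution of the obstacle you correctly flag.
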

\begin{proof} The error term can be split in the following way,
\begin{align}\label{eq:14}
   \sup_{\tilde{\Lambda}_{\gamma}\in D_{\Lambda}}\lvert\lvert&( \Gpar-\Gdex)\tilde{\Lambda}_{\gamma}\rvert\rvert_{L^2}=\sup_{\tilde{\Lambda}_{\gamma}\in D_{\Lambda}}\lvert\lvert(\Rc\circ\mathcal{A}\circ \mathcal{E}-\Gdex )\tilde{\Lambda}_{\gamma}\rvert\rvert_{L^2}\nonumber\\
    &=\sup_{\tilde{\Lambda}_{\gamma}\in D_{\Lambda}}\lvert\lvert(\Rc\circ\mathcal{A}\circ \mathcal{E}-\Rc\circ\Pc\circ \Gdex+\Rc\circ\Pc\circ \Gdex-\Gdex)\tilde{\Lambda}_{\gamma}\rvert\rvert_{L^2}\nonumber\\
    &\leq \sup_{\tilde{\Lambda}_{\gamma}\in D_{\Lambda}}\bigg(\underbrace{\lvert\lvert(\Rc\circ\mathcal{A}\circ \mathcal{E}-\Rc\circ\Pc\circ \Gdex\circ \Dc\circ \mathcal{E})\tilde{\Lambda}_{\gamma}\rvert\rvert_{L^2}}_{E_{\mathcal{A}}}\nonumber\\
    &+\underbrace{\lvert\lvert(\Rc\circ\Pc\circ \Gdex\circ \Dc\circ \mathcal{E}-\Rc\circ\Pc\circ \Gdex)\tilde{\Lambda}_{\gamma}\rvert\rvert_{L^2}}_{E_{\mathcal{E}}}\nonumber\\&+\underbrace{\lvert\lvert(\Rc\circ\Pc\circ \Gdex-\Gdex)\tilde{\Lambda}_{\gamma}\rvert\rvert_{L^2}}_{E_{\Rc}})\bigg)
\end{align}
where $E_{\mathcal{A}}, E_{\mathcal{E}}$ and $E_{\Rc}$ denote the approximator error, encoder error, and the reconstructor error respectively.
In the next three subsections we will analyze each of these error terms separately. In particular, we will show that for any given $\epsilon$ the network parameters can be chosen to ensure that each of the error terms separately is of the order $\epsilon/3$. Here and below, the symbol $\lesssim$ means that the inequality holds upto some constant factor, and the notation $\lVert \cdot \rVert_{L^2}$ is a shorthand for $\lVert \cdot \rVert_{L^2(\Omega)}$. Furthermore, the fact that $\Omega$ is a smooth bounded domain allows us to translate upper bounds in $L^{\infty}(\Omega)$ easily into upper bounds with respect to $L^2(\Omega)$ norm, we will use this fact freely wherever needed.
\subsection{\textbf{Approximator error}}\label{sec:7.1}
First we estimate the approximation error, $E_{\mathcal{A}}$. To that end, we examine the continuity of various maps appearing in the error terms above. Recall that $\tilde{\Lambda}_{\gamma}$ is a bounded map for admissible conductivities, \cite[Theorem A.3]{hanke_11}. As the map $\Rc$ is similar to the one defined in \cite{LM_22}, the Lipschitz constant of $\Rc$ follows from \cite[Lemma 3.2]{LM_22}. Consider the encoding map $\mathcal{E}(\tilde{\Lambda}_{\gamma}):=\tilde{R}^M_{\gamma}$, then we can show that this is Lipschitz, too. This follows from the discussion in \cite[Sections 2 and 4]{reider_08} from which we first get that the map $\gamma\mapsto \tilde{R}^{M}_{\gamma}$ is Lipschitz and subsequently we apply the estimate (\cite[Theorem 2.3]{harr_19}), $ \lvert\lvert \gamma_1-\gamma_2\rvert\rvert _{L^\infty}\leq C  \lvert\lvert \tilde{\Lambda}_{\gamma_1}-\tilde{\Lambda}_{\gamma_2}\rvert\rvert_{\Lc(L^2\to L^2)}$. As $D_{\Lambda}$ is a compact set from Lemma \ref{lem:1}, then $\mathcal{E}(D_{\Lambda})$ is also compact.
Now the map, $\Gdex$ is a Lipschitz continuous map for piecewise analytic conductivities, \cite[Theorem 2.3]{harr_19}. Furthermore, the map $\Pc$ is Lipschitz by similar argument as in \cite[Lemma 3.2, 3.3]{LM_22}. $\Dc$ is also Lipschitz from the following estimate and the fact that the auxiliary projection operator,  $\tilde{P}^E_M$ is bounded, see e.g. \cite{hyv_08}.
\begin{align*}
    \lvert\lvert (\tilde{\Lambda}^M_{\gamma_1}-\tilde{\Lambda}^M_{\gamma_2})f\rvert\rvert_{L^2_{\diamond}(\partial \Omega)}\leq C(z,\phi,\Omega,d,M)\lVert \tilde{R}^M_{\gamma_1}-\tilde{R}^M_{\gamma_2}\rVert_{l^2}\lVert \tilde{P}^E_Mf\rVert_{H^s(\partial \Omega)}.
\end{align*} Thus the composition $\Pc\circ \Gdex\circ\Dc:=G: \Rb^{M\times M} \to \Rb^P$ is also Lipschitz.  Here and below, let $\mathrm{Lip(\Rc)}$ and $\mathrm{Lip(\mathcal{E})}$ denote the Lipschitz constants for the (Lipschitz) maps $\Rc$ and $\mathcal{E}$. Thus
\begin{align*}
E_{\mathcal{A}}&\leq\sup\limits_{\tilde{\Lambda}_{\gamma}\in D_{\Lambda}}\lvert\lvert(\Rc\circ\mathcal{A}\circ \mathcal{E}-\Rc\circ G\circ \mathcal{E})\tilde{\Lambda}_{\gamma}\rvert\rvert_{L^2}\\
&\leq\operatorname{Lip}(\mathcal{R}) \cdot \sup_{X \in \mathcal{E}(D_\Lambda)} \left\| \mathcal{A}(X) - G(X) \right\|,
\end{align*}
 
The upper bound on the approximation error can be evaluated by finding an upper bound for the error incurred in approximating the map $G:\Rb^{M\times M}\to\Rb^P; { }X\mapsto (G_1(X),\cdots,G_P(X))$ by a neural network $\mathcal{A}$. This can be done using the results proved by Yarotsky in \cite{Yar_17}. To make this precise, we begin by approximating each of the maps $G_j:\Rb^{M\times M}\to \Rb,{ }j\in \{1,\cdots,P\}$ by a neural network $\mathcal{A}_j:\Rb^{M\times M}\to\Rb$ and then combining all the individual approximations into a single neural network $\mathcal{A}$ wherein
\begin{align*}
   \mathrm{size}(\mathcal{A})\leq P \max_{j=\{1,\cdots,P\}}\mathrm{size}(\mathcal{A}_j). 
\end{align*}
Then it follows from \cite[Theorem 1]{Yar_17} (see also discussion following \cite[Theorem 3.17]{LM_22})that for any given $\epsilon/3:=\tilde{\epsilon}>0$, there exists a neural network $\mathcal{A}_j$ with 
\begin{align*}
    \mathrm{depth}(\mathcal{A}_j)\lesssim (1+\log(\epsilon^{-1})) \quad \mathrm{size}(\mathcal{A}_j)\lesssim {{}{\tilde{\epsilon}^{-M}}}(1+\log(\tilde{\epsilon}^{-1}))
\end{align*} such that
\begin{align}
\sup\limits_{X\in \mathcal{E}(D_{\Lambda})}  \lVert G_j(X)  - \mathcal{A}_j(X)\rVert\leq \tilde\epsilon.
\end{align} Thus there exists a neural network $\mathcal{A}$ whose size is of the order  $\Oc(P\tilde{\epsilon}^{-M})$ that can approximate a Lipschitz continuous mapping $G:\Rb^{M\times M}\to\Rb^P$ within a given $\epsilon/3=\tilde{\epsilon}$ error, see also \cite[section 3.6.1]{LM_22}.

\subsection{\textbf{Encoding error}}
For shifted N-t-D operators $\tilde{\Lambda}_{\gamma}$ corresponding to admissible conductivities, the encoding error is given by:
\begin{align}
    \lvert\lvert(\Rc\circ\Pc\circ \Gdex\circ \Dc\circ \mathcal{E}-\Rc\circ\Pc\circ \Gdex)\tilde{\Lambda}_{\gamma}\rvert\rvert_{L^2}= \lvert\lvert\Rc\circ\Pc\circ \Gdex ((\Dc\circ \mathcal{E}-\Ib_{\Hb_r})\tilde{\Lambda}_{\gamma})\rvert\rvert_{L^2}
\end{align}
where $\Ib_{\Hb_r}$ is the identity map on $\Hb_r$. Observe that $(\Dc\circ \mathcal{E}-\Ib_{\Hb_r})\tilde{\Lambda}_{\gamma}=\tilde{\Lambda}^M_{\gamma}-\tilde{\Lambda}_{\gamma}$.
From \cite[Corollory 6.2]{garde_21}, we know that by suitably shrinking the size of electrodes as their number $M$ grows we get
\begin{lemma}\label{lem:2a}
 \cite[Theorem 7.8]{hyv_08}
  For $M\in \Nb$ denoting the number of electrodes,
  \begin{align}
  \lvert\lvert  \tilde{\Lambda}_{\gamma}- \tilde{\Lambda}^M_{\gamma}\rvert\rvert_{\Lc(L^2_{\diamond}\to L^2_{\diamond})}:=\delta_M \to 0 \quad \text{as}\quad M\to \infty.
  \end{align}
\end{lemma} 
{{}{\begin{remark}
    In this work, the CEM encoder is used because it provides a realistic model for electrode measurements and because known results show that the operator reconstructed from these measurements approximates the continuum Neumann-to-Dirichlet
operator. In particular, Lemma \ref{lem:2a} shows that as the number of electrodes increases, the
CEM based operator converges to the continuum N-t-D map. Similar
constructions are also possible for other electrode models, such as point
electrode models (see e.g. [Theorem 6.1]\cite{garde_21}), provided they admit a similar convergence property. In our analysis, the key requirement is therefore the existence of such a
convergence result for the encoder.
\end{remark}}}
As a simple consequence to the lemma above, and similar to the calculations in \cite[Lemma 5]{NiAb19} one can establish the relation between operator norm $\Lc(L^2_{\diamond}\to L^2_{\diamond})$ and the information-theoretically relevant $\Hb_r$-norm to get  $\lvert\lvert  \tilde{\Lambda}_{\gamma}- \tilde{\Lambda}^M_{\gamma}\rvert\rvert_{\Hb_r}\to 0$ as $M\to \infty$. As is shown in Lemma \ref{lem:1} below, $D_{\Lambda}$ is a compact set of $\Hb_r$. Consider the finite rank projection operators, $U_{M}:\Hb_{r}\to \Hb_{r}$ where $U_{M}:=\Dc_M\circ\mathcal{E}_M$ where $\mathcal{E}_M$ and $\Dc_M$ are as defined in section \ref{sec:5}. From Lemma \ref{lem:2a}, we have that for any $\tilde{\Lambda}_{\Gamma}\in D_{\Lambda}$:
\begin{align}
  \lim\limits_{M\to \infty} \sup\limits_{\tilde{\Lambda}_{\gamma}}\lvert\lvert {}{\tilde{\Lambda}}_{\gamma}-U_M(\tilde{\Lambda}_{\gamma})\rvert\rvert_{\Hb_r}=0. 
\end{align}
Let, $Z:=\bigcup \limits_{M\in \Nb} U_{M}(\tilde{\Lambda}_{\gamma})\cup D_{\Lambda}$. Then $Z$ is a compact set of $\Hb_{r}$ from \cite[Lemma 46]{kov_th}. Thus $\Gdex: Z\to \Gal$ is uniformly continuous on $Z$ and there exists a modulus of continuity, $\omega$ such that:
\begin{align}\label{eq:20}
    \forall z_1,z_2\in Z, \quad \lvert\lvert \Gdex(z_1)-\Gdex(z_2)\rvert\rvert_{L^{\infty}} \leq \omega(\lvert\lvert z_1-z_2\rvert\rvert_{\Hb_r})
\end{align}
On the other hand, given any $\epsilon$, we can clearly choose an $M:=M(\epsilon)$ such that
\begin{align}\label{eq:21}
    \sup\limits_{z\in D_{\Lambda}}\omega(\lvert\lvert U_{M}(z)-z\rvert\rvert_{\Hb_r})\leq \epsilon/3.
\end{align}
Finally, we know from arguments above that the maps $\Rc$ and $\Pc$ are Lipschitz continuous, see subsection \ref{sec:7.1}.
This shows that by using a large enough number of electrodes $M$, the relative CEM measurements can be used to approximate the continuum relative N-t-D operator to any arbitrary precision. More particularly, combining eqns.\eqref{eq:20} \eqref{eq:21} and the fact that $\Rc$ and $\Pc$ are Lipschitz we get that given any $\epsilon>0$, there exists $M:=M(\epsilon)$ such that,
\begin{align}
 \sup\limits_{\tilde{\Lambda}_{\gamma}\in D_{\Lambda}}\lvert\lvert(\Rc\circ\Pc\circ \Gdex\circ \Dc\circ \mathcal{E}-\Rc\circ\Pc\circ \Gdex)\tilde{\Lambda}_{\gamma}\rvert\rvert_{L^2}\lesssim  \epsilon/3.
\end{align}

\subsection{\textbf{Reconstruction error}}
Note that the output of the DeepONet lies in the space of $L^2$ functions. For any $\tilde{\Lambda}_{\gamma}\in D_{\Lambda}$, for the third term in \eqref{eq:14} we have, $E_{\Rc}\leq \sup\limits_{\tilde{\Lambda}\in D_{\Lambda}}{\lvert\lvert(\Rc\circ\Pc- \Ib_{\O})(\Gdex \tilde{\Lambda}_{\gamma})\rvert\rvert_{L^2}}$, where $\Ib_{\O}$ represents the identity map on the space of $L^2(\Omega)$ functions. Let $\tilde{\Lambda}_{\gamma}\in D_{\Lambda}$ and denote $g:=\Gdex(\tilde{\Lambda}_{\gamma})=\Gc^{\dagger}(\tilde{\Lambda}_{\gamma})=\Phi^{-1}\circ\gamma$, where $\Phi$ is the link function as described in section \ref{sec:5}. Since $\gamma$ is a piecewise analytic function, hence it belongs to some $H^s(\Omega)$ for $0<s<1/2$. It is easy to see that,
\begin{align*}
    \Rc\circ\Pc(g)=\tau_0^{\perp}+\Sigma_{k=1}^P \langle g,\tau_k^*\rangle \tau_k
\end{align*}
where $\tau_0^{\perp}:=\tau_0-\Sigma_{k=1}^P \langle \tau_0,\tau_k^*\rangle \tau_k$ is the projection of $\tau_0$ onto the orthogonal complement of $\mathrm{span}(\tau_1,\cdots,\tau_p)$. Following arguments from \cite[Lemma 3.2,3.3, 3.4]{LM_22}, we have the following Lemma.
\begin{lemma}
    For any $0<\epsilon<1$ and some $C_1$ such that $\lVert g\rVert_{H^s}\leq C_1$, there exists a trunk net $\tau:\Rb^d\to\Rb^P$ with
    \begin{align*}
       & \mathrm{size}(\mathbf{\tau})\sim 1+P\log(P/(\epsilon/3))^2\\
       &\mathrm{depth}(\mathbf{\tau})\sim 1+\log(P/(\epsilon/3))^2
    \end{align*}
such that
\begin{align*}
    E_{\Rc}\lesssim  C_1 P^{-s/d}+\epsilon/3.
\end{align*}
\end{lemma}
Here we can choose $P$ to be large enough such that, $E_{\Rc}\lesssim \epsilon/3.$

\noindent Finally, the upper bounds shown on the three kinds of error, viz. approximator error, encoder error and reconstructor error, prove Theorem \ref{lem:2}.
\end{proof}
\end{section}

\section{Network Architecture}

In this section, we describe the network architecture used in our experiments which corresponds to the one depicted in Fig. \ref{NetPic}. The input for the Branch network are the measurements of the EIT experiment. For the numerical simulations, we imagine using 16 different electrodes through which the current will be injected and the voltages measured. In practical settings, current is injected by using a pair of adjacent source and sink electrodes. For each such current pattern, voltage differences are measured across adjacent electrodes on the remaining $14$ electrodes. Hence, we obtain $13$ voltage difference measurements per injected current pattern. In total, we get $16*13 = 208$ measurements which can be arranged in the form of a matrix. {Many a times, one uses zero-padding to fill in the missing entries of the matrix to make it a square matrix.} The measurements are then fed into a series of fully connected ResNet blocks to output a vector of the same dimensionality as the output of the later defined Trunk network (See Table \ref{BranchNet}). 

\begin{table}[ht]
\centering
\caption{Branch Network Architecture}
\scriptsize
\begin{tabular}[t]{lcc }
\hline
Network Block&Input Size & Output Size\\
\hline
Two Layer ResNet& 208 & 1024 \\
Two Layer ResNet& 1024 & 1024 \\
Two Layer ResNet& 1024 & 1024 \\
Two Layer ResNet& 1024 & 512 \\
Two Layer ResNet& 512 & 512\\
Two Layer ResNet& 512 & 512\\
Two Layer ResNet& 512 & 512\\
\hline
\label{BranchNet}
\end{tabular}
\end{table}%

The Trunk network takes points $(x, y) \in \mathbb{R}^2 \subset \Omega$ as input. It then uses a series of ResNet blocks to get an output of the same dimensionality as the Branch network (See Table \ref{TrunkNet}). An inner product operator is used to combine both outputs into a prediction {{}{$\hat{\gamma}_{x, y}\in \mathbb{R}$. $\hat{\gamma}_{x, y}$} }is to be interpreted as a prediction of {{}{$\gamma$}} at point $(x, y) \in \mathbb{R}^2 \subset \Omega$ given the measurements. {{}{Each ResNet block consists of exactly two fully connected layers with a residual connection that skips these two layers.}} For all ResNet blocks in both the branch and trunk networks, we use Leaky ReLU activations. {{}{We tested three activation functions, namely: Leaky ReLU, ELU, and tanh. For our experiments, Leaky ReLU marginally outperformed the other two. } }

\begin{table}[ht]
\centering
\caption{Trunk Network Architecture}
\scriptsize
\begin{tabular}[t]{lcc }
\hline
Network Block&Input Size & Output Size\\
\hline
Two Layer ResNet& 2 & 256 \\
Two Layer ResNet& 256 & 512 \\
\hline
\label{TrunkNet} 
\end{tabular}
\end{table}%

One interesting observation from the computational perspective is that, for each EIT experiment, the Branch network only has to be computed once. The Trunk network has to be evaluated for many points (in parallel). That is why the Trunk network is chosen to be much more lightweight than the Branch network. We note that we experimented and evaluated our method with many different network architectures of similar sizes and they all performed reasonably well. {We chose to present this setup as this seemed to do the best in our experiments. }
\begin{figure}[h]
\begin{center}
\includegraphics[scale=0.5]{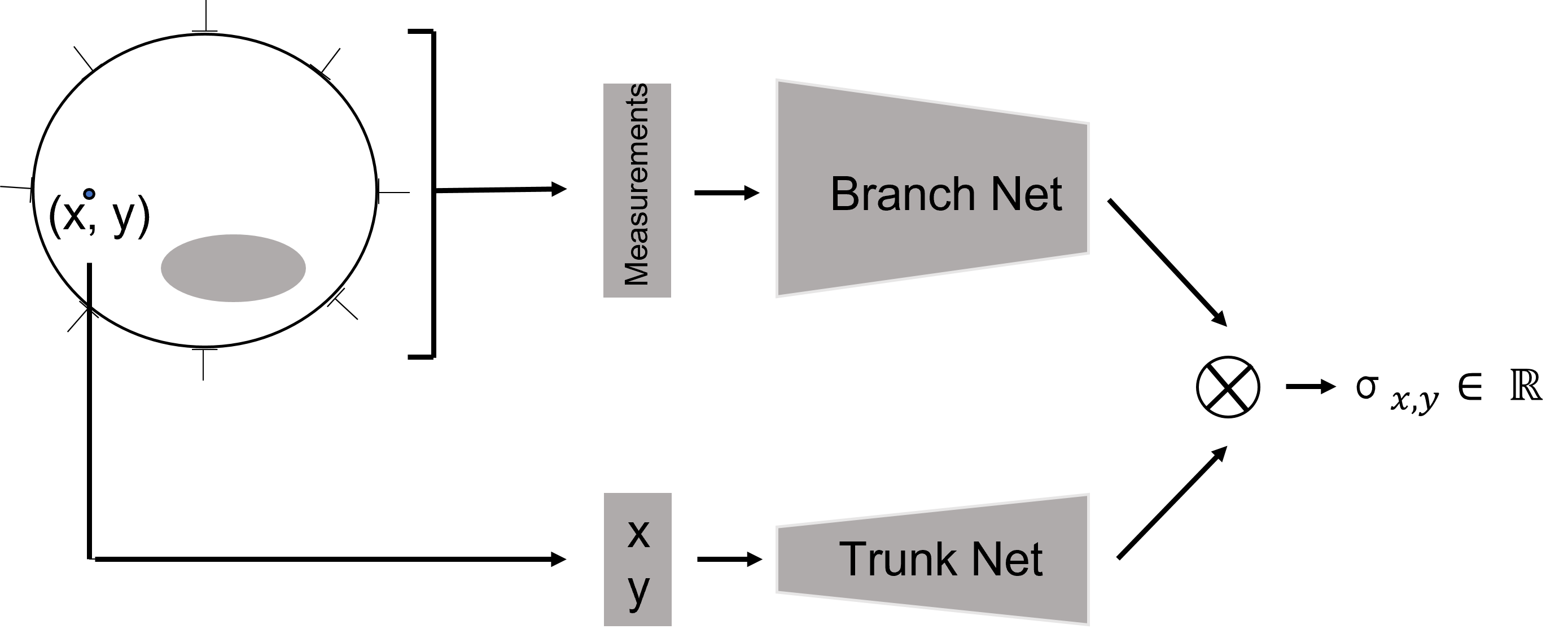}
\end{center}
\caption{DeepONet architecture used in this manuscript. It feeds the EIT measurements into a Branch Net and $2-d$ point on $\Omega$ into a Trunk Net. The outputs of both networks are combined via an inner product to predict the {{}{$\gamma_{x, y}$}} at points $(x, y)$.}\label{NetPic}
\end{figure}

\section{Experiments}

In this section, we show how our proposed Brunch-Trunk network performs in computational experiments. 

\subsection{\textbf{Training the Neural Network}}

\subsubsection{Training Data}

We used a finite element based PDE solver to generate a dataset of $16,384$ data pairs consisting of 1, 2, 3 and 4 anomalies and the corresponding measurements. That is we used $4096$ conductivity's and the corresponding measurements for each 1, 2, 3, and 4 circular anomalies. The measurements {{}{were}} obtained with a finite element based PDE solver\cite{liu2018pyeit} solving the equation \eqref{eq:1}. {{}{In detail, for each anomaly, we uniformly generated the radius $r$ in $[0.05, 0.25]$. Then we uniformly distributed the center $x, y$ of that anomaly in a circle of radius $1-r-0.01$. The reason for this is that we assume that the anomalies do not touch the boundary of the domain $\Omega$ which agrees with the standard theoretical and practical assumptions. For the cases where more than one anomaly was used, the radius and center were generated independently from each other. The conductivities of each anomaly were sampled independently and uniformly in $[5, 9]$. The training points were generated by uniformly sampling $50\%$ of the points from the anomalous and $50\%$ from the background region. }}

During training we generate a point cloud of 1024 points $(x, y) \in \mathbb{R}^2 \subset \Omega$ and the corresponding {{}{ $\gamma_{x, y}$}}. As already stated above, our sampling is to be understood as sampling $50\%$ of the points from the background and $50\%$ of the points from the anomalous regions. This is only done if there are anomalies in the conductivity profile. The reason for this kind of sampling is that in our experience it significantly increased the correct detection on small anomalies. The idea behind this is that if only a very small amount of samples is anomalous; the neural network has the tendency of predicting everything as background. {We note that more sophisticated sampling methods have been advocated in the literature. However, for the verification of our theoretical results, this simple sampling scheme tends to also work well. } 
\begin{remark}
{{}{In our experiments we used a dataset of 16,384 training pairs, which was sufficient to train the DeepONet and obtain stable reconstructions. 
We did not perform a systematic study of the minimal dataset size required. Investigating data efficiency and sample complexity is an interesting direction  for future work.}}
\end{remark}
\subsubsection{Training and Testing}

Our neural network is of the form shown in Figure \ref{NetPic}, consisting of the two components defined in Table \ref{BranchNet} and \ref{TrunkNet}. During the Training process, we first evaluate the measurement data via our Branch network followed by evaluating the corresponding large set of 1024 $2D$ points with the Trunk network. Then the inner product of both outputs is computed. Each estimated value {{}{$\hat{\sigma}_{x, y}$}} is then compared with the ground truth {{}{$\sigma_{x, y}$ }}with a weighted- $L^2$-loss. 

{{}{\begin{align*}
&L^2_{Weighted}(\gamma_{x, y}, \hat{\sigma}_{x, y}) =\\ &=\frac{\sum_{\gamma_{x, y}\in \text{background}}(\gamma_{x, y}- \hat{\gamma}_{x, y})^2}{2 \sum\mathds{1}_{background}(\gamma_{x, y})} + \frac{ \sum_{\gamma_{x, y}\in \text{anomaly}}(\gamma_{x, y}- \hat{\gamma}_{x, y})^2}{2 \sum\mathds{1}_{anomaly}(\gamma_{x, y})}
\end{align*}
Where $\mathds{1}$ refers to the indicator function to check if the conductivity at a point $\gamma_{x, y}$ belongs to the anomalies or the background. }}
Furthermore, during the training process, we use a Batch size of 32 conductivity setups. The training is performed for 200 epochs using the ADAM optimizer. 
The 1-D cut plots in Figure $\ref{Images_Comparission}$ denotes the values of conductivity function along a 1-D profile indicated by a white line in the original phantom.

{{}{Note that this weighted $L^2$ loss is used only during the supervised training 
of the DeepONet. The IRGN method cannot be optimized using this loss because 
the ground truth conductivity is not available during reconstruction. 
Instead, IRGN is optimized using the standard least-squares data misfit, 
while the weighted $L^2$ metric is used only for evaluating the final reconstructions. We explain this in detail in section \ref{sec:weighted err}.}}
 \subsection{Performance} In this subsection, we describe the performance of our proposed method in our numerical studies and compare it against a standard method used for EIT inversion.
\subsubsection{Baseline: Iteratively Regularized Gauss-Newton Method}

We use the Iteratively Regularized Gauss-Newton (IRGN) method, see e.g. \cite{ahmad2019comparison}, as a baseline for our approach. The IRGN method is a popular gradient-based optimization method for solving the EIT problem in an analytical setting on a PDE mesh. We use the standard Tikhonov regularized IRGN method, see \cite{ahmad2019comparison} for a good description of this method. We do not provide an extensive comparison with other methods because our main results are on the theoretical side. Hence, the goal is to show that our method is competitive with one state-of-the-art method. 

\subsubsection{Comparison}  

We evaluate both methods on 5 images at $1\%$ additive measurement noise. The comparison results can be found in Figure \ref{Images_Comparission}. On the left-hand side is the ground truth image. The second {{}{column}} is the IRGN method followed by our proposed method. Note the first three figures are in the same color range. The rightmost image is a $1D$ cut through the previous 3 images. The cut direction is visualized in a white line in the ground truth image. We observe that both methods are approximately finding the correct location of the anomaly. However, our method outperforms the IRGN method in terms of obtaining the {{}{correct $\gamma $}} in the anomalous areas. Some of the numerical values can be found in noise study in Table \ref{noise_table}.{We also note the good generalization ability of the proposed network in practice. In particular, observe that in the third and fourth row of Fig \ref{Images_Comparission}, we attempt to reconstruct thin and stretched rectangular shaped anomalies, while the network was trained only on circular anomalies. }{}{We note that the DeepONet approach incorporates prior information from its training data, which may be beneficial when the test data has a similar setup (such as similar ranges of conductivities and shapes), but might be disadvantageous when the test data is out of the training distribution, as seen in third and fourth row of Fig \ref{Images_Comparission}, where rectangular anomalies are present. Overall, we still find DeepONet to give robust reconstruction even in out of distribution cases. On the other hand, the IRGN method does not incorporate such prior information, which may explain why the DeepONet performs better in finding the correct conductivity values in the anomalous regions. }
\begin{remark}
{{}{Besides IRGN, other regularization-based approaches such as 
total variation (TV) regularization are commonly used for EIT reconstruction. Deep learning approaches based on convolutional architectures (e.g., U-Net) have also been proposed. These methods typically operate on fixed image grids, 
whereas the DeepONet framework represents the conductivity as a continuous function.}}
\end{remark}

\begin{figure}[ht!]
\begin{flushleft}
 \hspace{.6cm}Ground Truth \hspace{1.0cm} IRGN Method \hspace{1.0cm} DeepONet \hspace{1.5cm} 1-D Cut\end{flushleft}
 \centering
 \includegraphics[width=0.23\textwidth]{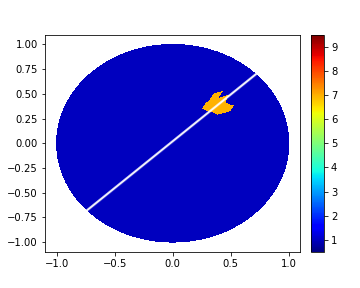}\hspace{0.01cm}
 \includegraphics[width=0.23\textwidth]{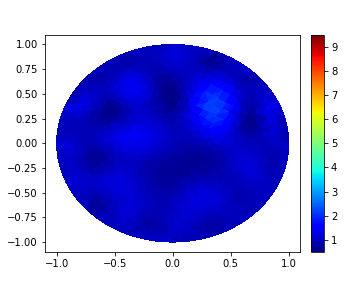}\hspace{0.01cm}
 \includegraphics[width=0.23\textwidth]{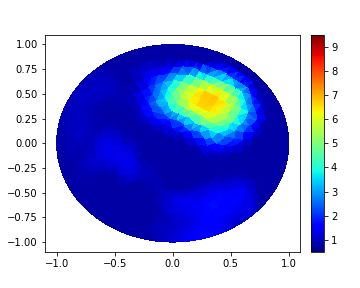}\hspace{0.01cm}
  \includegraphics[width=0.24\textwidth]{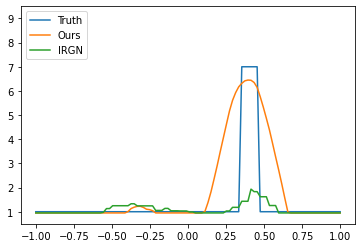}\hspace{0.01cm}\\
 \includegraphics[width=0.23\textwidth]{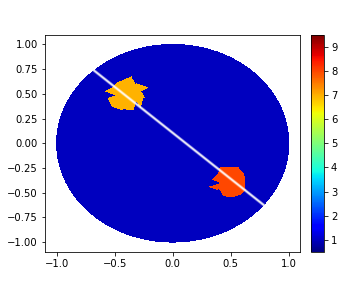}\hspace{0.01cm}
 \includegraphics[width=0.23\textwidth]{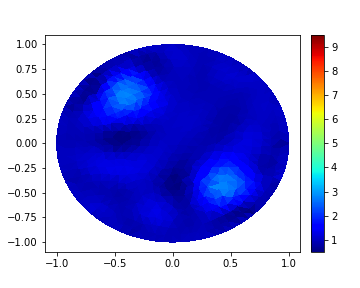}\hspace{0.01cm}
 \includegraphics[width=0.23\textwidth]{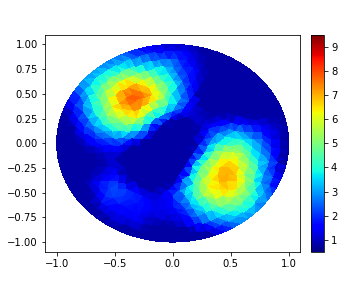}\hspace{0.01cm}
  \includegraphics[width=0.24\textwidth]{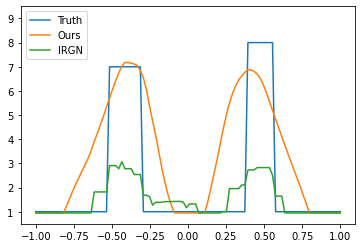}\hspace{0.01cm}\\
 \includegraphics[width=0.23\textwidth]{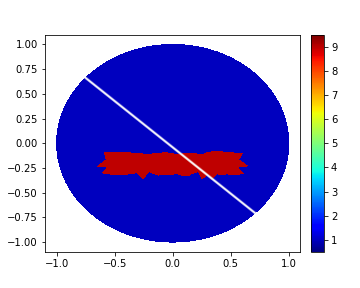}\hspace{0.01cm}
 \includegraphics[width=0.23\textwidth]{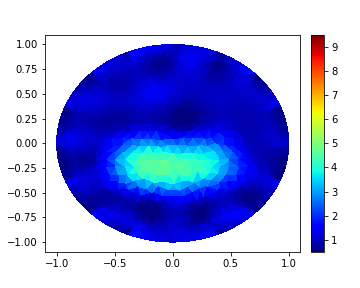}\hspace{0.01cm}
 \includegraphics[width=0.23\textwidth]{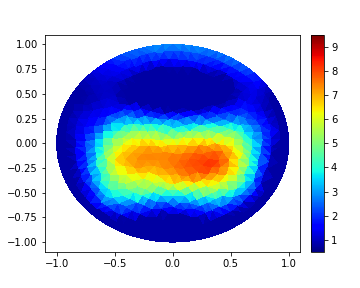}
 \hspace{0.01cm}
  \includegraphics[width=0.24\textwidth]{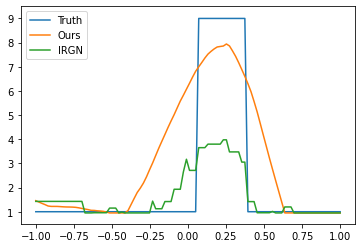}\hspace{0.01cm}\\
 \includegraphics[width=0.23\textwidth]{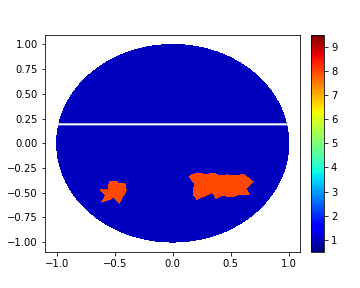}\hspace{0.01cm}
 \includegraphics[width=0.23\textwidth]{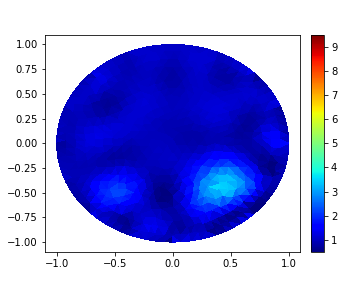}\hspace{0.01cm}
 \includegraphics[width=0.23\textwidth]{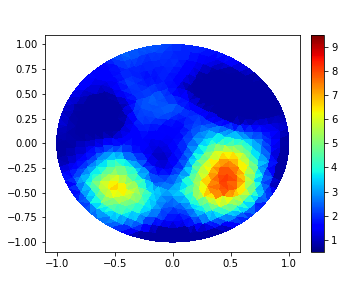}
 \hspace{0.01cm}
  \includegraphics[width=0.24\textwidth]{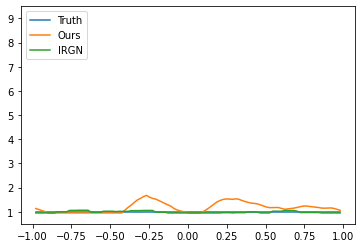}\hspace{0.01cm}\\
 \includegraphics[width=0.23\textwidth]{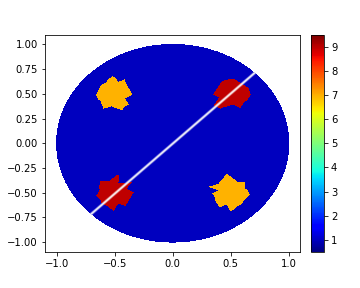}\hspace{0.01cm}
 \includegraphics[width=0.23\textwidth]{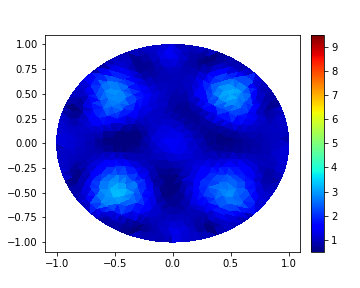}\hspace{0.01cm}
 \includegraphics[width=0.23\textwidth]{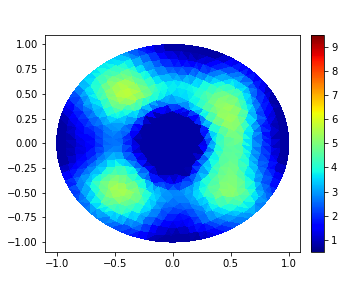}\hspace{0.01cm}
  \includegraphics[width=0.24\textwidth]{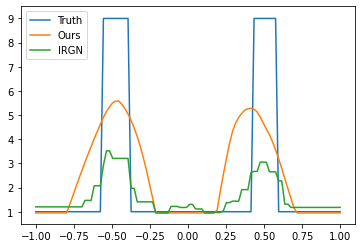}\hspace{0.01cm}

	\caption{Comparison of reconstructions of our DeepONet with the classical IRGN method. We observe that the DeepONet obtains much closer $\gamma$ values to the ground Truth in the anomalous regions. The white line in the ground truth is the cut at which the plot on the right hand side is displayed.}
	\label{Images_Comparission}
\end{figure}
\subsection{{Weighted $L^2$ error}}\label{sec:weighted err}

We believe that the standard $L^2$ error is not a useful metric here, because, for images with small anomalous regions, a small error does not imply that the location and conductivity value of the anomaly were estimated properly. For example, in such a case, the reconstruction where everything is exactly the background value, would have relatively low $L^2$ errors. However, the reconstruction would not have any practical use. Hence, we use the weighted $L^2$ error that takes $50\%$ weight of the ground truth anomalous areas and the rest from the background area. Using this criteria, we see that the DeepONet method outperforms the IRGN method as expected from seeing the images as well as the error table below.

\subsection{Noise Study}




The stability of the method against higher noise levels on a single image is evaluated. All experiments of the errors are repeated ten times (On Fig. \ref{Images_Comparission}, Row 1) and the mean and standard deviation are displayed in Table \ref{noise_table}. Overall, the weighted $L^2$-Error is increasing in a noise pattern in the DeepONet method. The same is true for the IRGN method. However, the IRGN method does have lower errors. We note that for large noise levels from $7\%$, the reconstructed anomalies find significantly worse locations and shapes. 

\begin{table}[ht]
\scriptsize
\centering
\caption{Noise Study - Weighted $L^2$ - Fig. \ref{Images_Comparission}, Row 1}
\begin{tabular}[t]{ccccccc}
\hline
Noise Level &$1\%$&$2\%$&$3\%$&$5\%$&$9\%$\\
\hline
DeepONet  &\boldmath{ $0.54\pm 0.52$ }& \boldmath{$0.63 \pm 0.66$}& \boldmath{$0.65 \pm 0.70 $}&  \boldmath{$ 1.05 \pm 1.25$}& \boldmath{ $1.62\pm 2.17$}\\
IRGN &$14.16\pm 0.79$ & $14.15 \pm 0.98$& $14.34 \pm 2.87 $&   $ 14.65 \pm 3.75$& $14.21 \pm 4.71$\\
\hline
\label{noise_table}
\end{tabular}
\end{table}%
\begin{remark}
While our reconstructions are already competitive, we note that recent work \cite{Lee_24} has proposed an alternative two-step training strategy for DeepONets that can further enhance recovery of sharp supports in imaging problems. Incorporating such variants into the EIT setting considered here is an exciting direction for future research, but is beyond the scope of the present work.
\end{remark}

\section{Conclusion}

In this work, we have generalized the earlier known approximation-theoretic results for approximating a map between infinite-dimensional function spaces using DeepONets to the case where we can provide similar guarantees even when the goal is to learn a map between a space of (N-t-D) operators to the space of conductivities (functions). This map arises naturally in the framework of Electrical Impedance Tomography which is a non-invasive medical imaging modality that is becoming increasingly prominent in especially several neuro-imaging applications. As deep-learning networks are becoming increasingly popular, it behooves us to go beyond black-box description of such networks and to make them  interpretable so as to facilitate deployment with confidence in critical applications such as medical imaging. Providing approximation-theoretic guarantees of the kind given in this work is a crucial first step in achieving the goal of interpretable neural-networks. In fact, we believe that the analysis carried out in this work can be easily adapted to many other problems wherein the input and output spaces are classes of operators satisfying certain general assumptions.
We would like to stress that our framework is fundamentally different from the well-studied DeepONet setting. Classical DeepONet theory deals with \emph{function-to-function} maps, where both the domain and codomain live in comparable function spaces. By contrast, in our problem the input is a Neumann-to-Dirichlet operator (an infinite-dimensional Hilbert--Schmidt object) while the output is a conductivity function in an appropriate subspace of $L^2(\Omega)$. This \emph{operator-to-function} structure requires a different treatment. In particular, existing DeepONet proofs cannot be transplanted directly; key steps such as the encoder--decoder construction, compactness arguments for the admissible operator set, and the error decomposition must be {adapted and reinterpreted} to fit this setting. Our theorem therefore extends the scope of approximation guarantees to a new class of problems beyond what has been covered in extant literature. Moreover, we have compared our approach against the widely used IRGN benchmark and show that the proposed network outperforms the baseline in numerical studies. We also performed a noise study and show that robustness of the deep-learning based against a weighted $L^2$ error metric as introduced in this work. To our knowledge, this is the first work that uses a DeepONet architecture for approximating an operator-to-function map as appearing in EIT, while also providing a theoretical guarantee to show its effectiveness. As already stated, our numerical results back-up our theoretical findings. 

\section{Appendix}
In this section, we collect some auxiliary results needed for the proof of our main theorem.
\begin{lemma}\label{lem:1}
   The set $D_{\Lambda}$ is a compact set in $\Hb_{r}$.
\end{lemma}
\begin{proof}
    First of all we note that $\Hb_{r}$ is a separable Hilbert Space with a countable orthonormal basis given in \cite [section 5.1]{NiAb19}. We recall the following theorem which is given in e.g. \cite[Chapter 2]{con90},  \cite[Theorem 197]{Rod_lec}. 
\begin{theorem}{\cite[Theorem 197]{Rod_lec}}
Let \( H \) be a separable Hilbert space, and let \( \{e_k\}_{k=1}^\infty \) be an orthonormal basis of \( H \). Then a subset \( K \subset H \) is compact iff:
\begin{enumerate}
    \item \( K \) is closed,
    \item \( K \) is bounded, and
    \item \( K \) has {equi-small tails} with respect to \( \{e_k\} \), i.e., for every \( \varepsilon > 0 \), there exists \( N \in \mathbb{N} \) such that for all \( \vec{v} \in K \),
    \[
    \sum_{k=N}^\infty |\langle \vec{v}, e_k \rangle|^2 < \varepsilon^2.
    \]
\end{enumerate}
\end{theorem} Now, the fact that $D_{\Lambda}$ is bounded follows from \cite[(2.5)]{garde_21} as before. The fact that $D_{\Lambda}$ has equi-small tails follows from \cite[Lemma 4]{NiAb19} in which we choose $J,K$ and $\nu$ such that $C\min(J,K)^{-\nu}\leq \varepsilon$ for any given $\varepsilon$. Thus it remains to show that $D_{\Lambda}$ is closed. To that end, choose a sequence of relative N-t-D operators $(\tilde{\Lambda}_{\gamma_n}: \gamma_n\in \Gamma)$ such that they converge to some operator $T$ in $\Hb_r$ i.e., $\tilde{\Lambda}_{\gamma_n}\to T$. We will show that there exists some $\gamma\in \Gamma$ such that $T=\tilde{\Lambda}_{\gamma}$ i.e. $T$ is also a N-t-D operator. First of all note that since $(\tilde{\Lambda}_{\gamma_{n}})$ converges, hence  $(\tilde{\Lambda}_{\gamma_{n}})$ is actually Cauchy in the $\Hb_r$ norm. Using the relation between the operator norm and the information-theoretic Hilbert-Schmidt norm as stated in \cite [Lemma 5]{NiAb19}, we can show that $(\Lambda_{\gamma_{n}})$ is also Cauchy in the operator norm $\Lc(L^2\to L^2)$. By using the inverse continuity result for the Calderon problem, see \cite [Theorem 2.3]{harr_19}, we get that the sequence $\gamma_{n}$ is Cauchy with respect to the $L^{\infty}$ norm and hence converges, say $\gamma_n\to \gamma_0 \in L^{\infty}(\Omega)\cap \Gamma$. Let us now consider, $\Lambda_{\gamma_0}$. From forward continuity result for the Calderon problem, see \cite[Lemma 2.5]{harr_19}, $\Lambda_{\gamma_n}\to \Lambda_{\gamma_0}$. Now from uniqueness of the limit, $\Lambda_{\gamma_0}=T$. Hence the set, $D_{\Lambda}$ satisfies all three criteria for it be a compact subset of $\Hb_{r}$.
\end{proof}

\begin{lemma}\cite[Theorem 4.1]{Dug_51}\label{th:dug}
  Let $X$ be an arbitrary metric space, $A$ a closed subset of $X$, $L$ a locally convex linear space, and $f:A\to L$ be a continuous map. Then there exists an extension $F:X\to L$ (which is continuous by definition) such that $F(a)=f(a)$ for all $a\in A$.   
\end{lemma}

Finally, consider the encoding map $\mathcal{E}(\tilde{\Lambda}):=\mathcal{E}_M(\tilde{\Lambda})=\tilde{R}^M_{\gamma}.$  We have the following continuity results:
\begin{lemma}\cite[section 2]{reider_08}\label{lem:7.3}
The map $T:\gamma\to \tilde{R}_{\gamma}^M$ is Lipschitz continuous, i.e. $\lvert\lvert \tilde{R}^M_{\gamma_1}-\tilde{R}^M_{\gamma_2}\rvert\rvert_{\Lc(L^2\to L^2)}\leq \lvert\lvert \gamma_1-\gamma_2\rvert\rvert _{L^\infty}$
\end{lemma}
\begin{lemma}\cite[Theorem 2.3]{harr_19}\label{lem:7.4}
$\lvert\lvert \gamma_1-\gamma_2\rvert\rvert _{L^2}\leq c \lvert\lvert \gamma_1-\gamma_2\rvert\rvert _{L^\infty}\leq C  \lvert\lvert \tilde{\Lambda}_{\gamma_1}-\tilde{\Lambda}_{\gamma_2}\rvert\rvert_{\Lc(L^2\to L^2)}$
\end{lemma}
From previous two Lemmas, we get that the map $\mathcal{E}$ is Lipschitz continuous.


%
 \section*{There is no conflict of interest.}
 \section*{The relevant codes used for the experimental section in the article can be made available by the authors on reasonable request.}
%

\bibliographystyle{abbrv}
\bibliography{refs}

\end{document}